\documentclass{article}
\pdfoutput=1


\usepackage[preprint,nonatbib]{neurips_2021}



\usepackage[utf8]{inputenc} 
\usepackage[T1]{fontenc}    
\usepackage{amsmath} 
\usepackage{amsthm,multirow}
\usepackage{hyperref}       
\usepackage{url}            
\usepackage{booktabs}       
\usepackage{amsfonts}       
\usepackage{nicefrac}       
\usepackage{microtype}      
\usepackage{xcolor}         
\usepackage{graphicx}
\usepackage{caption}
\usepackage{subcaption}
\usepackage{epstopdf}

\def \grad {\nabla}
\def \R {\mathbb{R}}
\def \N {\mathbb{N}}
\def \E {\mathbb{E}}
\def \O {\mathcal{O}}
\def \T {\mathcal{T}}
\def \calF {\mathcal{F}}

\def \D {\mathrm{D}}
\def \input {\mathrm{in}}
\def \output {\mathrm{out}}

\newcommand{\tr}[1]{\mathrm{tr}\left(#1 \right)}
\newcommand{\expm}[1]{\mathrm{expm}\left(#1 \right)}
\newcommand{\nbf}[1] {{#1}}
\def \Exp {\mathrm{Exp}}
\def \argmax {\mathrm{argmax}}

\usepackage{enumitem}
\newtheorem{theorem}{Theorem}
\newtheorem{assumption}{Assumption}
\usepackage{algorithm}
\usepackage{algpseudocode}

\title{Coordinate descent on the orthogonal group for recurrent neural network training}

%

\author{%
  Estelle Massart\thanks{Second affiliation: National Physical Laboratory, Hampton Road, Teddington, Middlesex, TW11 0LW, UK}\\
  Mathematical Institute\\
  University of Oxford\\
  Oxford, OX2 6GG, UK \\
  \texttt{massart@maths.ox.ac.uk} \\
   \And
   Vinayak Abrol \\
   IIIT Delhi \\
   India \\
   \texttt{ abrol@iiitd.ac.in} \\
}

\begin{document}

\maketitle

\begin{abstract}
We propose to use stochastic Riemannian coordinate descent on the orthogonal group for recurrent neural network training. The algorithm rotates successively two columns of the recurrent matrix, an operation that can be efficiently implemented as a multiplication by a Givens matrix. In the case when the coordinate is selected uniformly at random at each iteration, we prove the convergence of the proposed algorithm under standard assumptions on the loss function, stepsize and minibatch noise. In addition, we numerically demonstrate that the Riemannian gradient in recurrent neural network training has an approximately sparse structure. Leveraging this observation, we propose a faster variant of the proposed algorithm that relies on the Gauss-Southwell rule. Experiments on a benchmark recurrent neural network training problem are presented to demonstrate the effectiveness of the proposed algorithm. 
\end{abstract}

\section{Introduction}
Exploding or vanishing gradients are key issues affecting the training of deep neural networks (DNNs), and are particularly problematic when training recurrent neural networks (RNNs) \cite{Bengio1994,Pascanu2013}, an architecture that endows the network with some memory and has been proposed for modeling sequential data (see, e.g., \cite{Giles94}). In recurrent neural networks, the signal propagation is described by the following pair of equations
\begin{equation} \label{eq:network}
    \left\{ \begin{array}{rcl}
    h(t+1) & = &\phi(W_{\input} x(t+1) + W h(t)), \\ 
    y(t+1) & = & W_{\output} h(t+1) + b_{\output},
    \end{array} \right.
\end{equation}
for $t = 0, 1, \dots$, with input $x(t) \in \mathbb{R}^{d_{\input}}$, hidden state $h(t) \in \mathbb{R}^{d}$, and output $y(t) \in \mathbb{R}^{d_{\output}}$. The mapping $\phi: \R\rightarrow \R$ is a pointwise nonlinearity, and $W \in \mathbb{R}^{d\times d}$, $W_{\input} \in \mathbb{R}^{d\times d_{\input}}$, $W_{\output} \in \mathbb{R}^{d_{\output}\times d}$  and $b_{\output} \in \mathbb{R}^{d_{\output}}$ are model parameters. The repeated multiplication of the hidden state by the recurrent matrix $W$ makes these architectures particularly sensitive to exploding or vanishing gradients.

A recently proposed remedy against exploding and vanishing gradients in RNNs imposes the recurrent weight matrix $W$ to be orthogonal/unitary, see, e.g., \cite{Arjovsky2016}. However, for large models, enforcing this constraint (e.g., by projecting the matrix on the set of orthogonal/unitary matrices, at each iteration) comes with substantial computational costs, scaling cubically with $d$. Several solutions have been proposed to alleviate these costs, and are summarized in Section~\ref{sec:background}. In this work, using the fact that the set of orthogonal matrices admits a Riemannian manifold structure, we propose a stochastic Riemannian coordinate descent algorithm for RNN training. The resulting algorithm works as follows: for each mini-batch, instead of updating the full matrix $W$, we apply a rotation to a pair of columns of $W$ only; this is equivalent to restricting the update of the matrix $W$ to one coordinate of the tangent space to the manifold per iteration (for a suitably chosen basis of that tangent space). This rotation can be implemented efficiently as a multiplication by a Givens matrix, and the resulting cost per iteration scales linearly with $d$. 

\subsection{Related works}
\label{sec:background}
\paragraph{Orthogonal/unitary RNNs.}  Unitary RNNs have been initially proposed in \cite{Arjovsky2016} to avoid exploding or vanishing gradients. Various algorithms have then been developed to alleviate the additional computational burden implied by repeated orthogonalization, mostly proposing different parametrizations of the orthogonal/unitary group, see \cite{Arjovsky2016,Wisdom2016,Jing2017,Hyland2017,Mhammedi2017,Helfrich2018, Maduranga2019, Lezcano2019}. However, most of these works come with no theoretical guarantee to recover a stationary point of the original problem. Alternative approaches build on the geometry of the set of orthogonal/unitary matrices: \cite{Wisdom2016} propose a stochastic Riemannian gradient descent on the manifold of unitary matrices combined with the Cayley retraction, while the algorithm given in \cite{Lezcano2019} can be seen as a stochastic Riemannian gradient descent on the orthogonal group using a different Riemannian metric, relying on the Lie group structure of those sets. \cite{Lezcano2019} also gives an implementation trick relying on Pad\'e approximants for reducing the cost of the matrix exponential (and its gradient). Let us finally mention two recent toolboxes extending the PyTorch class \emph{torch.optim} to parameters constrained to lie on some manifolds, including the orthogonal group: McTorch \cite{Mctorch} and GeoTorch \cite{lezcano2019trivializations}. 

\paragraph{Orthogonal constraints in other DNN architectures.} Orthogonal weights have also been used for other network architectures, including fully-connected, convolutional and residual neural networks \cite{Ozay2016,Huang2018,Huang2020,Bansal2018,Jia2020}. Using orthogonal weight matrices in DNNs preserves the energy of the signal propagated through the hidden units, and has been shown numerically to result in a lower test error than comparable unconstrained DNNs \cite{Huang2018,Jia2020}. Orthogonal initialization moreover leads to dynamical isometry \cite{Pennington2017}, the desirable regime in which the spectrum of the input-output Jacobian is concentrated around the value one, shown to result in a faster training, see \cite{Pennington2017,Hu2020,murray2021}. For arbitrary DNN architectures, orthogonal regularization is sometimes preferred over strict orthogonal constraints (see, e.g., \cite{Jia2017,Yoshida2017,Bansal2018}).

\paragraph{Coordinate descent algorithms}
In the Euclidean setting, coordinate descent (CD) is a well-studied algorithm that minimizes successively the loss along a coordinate, typically using a coordinate-wise variant of a first-order method, and has been shown to achieve state-of-the-art results on a range of high-dimensional problems \cite{Wright2015}. In particular, CD algorithms have recently been applied to DNN training, see, e.g., \cite{Zheng2019,Palagi2019} and reference therein. Though the convergence of CD has been widely addressed in the literature \cite{Patrascu2014,Wright2015,Zheng2019},  few works explore the convergence of stochastic/mini-batch CD, i.e., address the situation where only a stochastic estimate of the partial derivatives is available. Exceptions include \cite{Wang2014,Zhao2014}, which assumes strong convexity of the objective, and \cite{Palagi2021} which addresses  block-coordinate incremental (i.e., deterministic) gradient descent in the nonconvex setting. On the orthogonal group, CD has been applied in \cite{Ishteva2013} for low (multilinear) rank approximation of tensors, and in \cite{Shalit2014} for sparse PCA and tensor decomposition. Note that these algorithms minimize exactly the loss along the coordinate direction at each iteration, an operation which is out of reach in DNN training. The convergence of coordinate descent on manifolds, using coordinate-wise first-order updates relying on exact partial derivatives, has been very recently studied  in \cite{Gutman2020}. 

\subsection{Contributions}
We propose a stochastic Riemannian coordinate descent (SRCD) algorithm for training orthogonal RNNs, in which the cost per iteration scales linearly with the size $d$ of the recurrent matrix $W$. The proposed algorithm is close to the one presented in \cite{Jing2017}; we highlight three main differences. First, \cite{Jing2017} addresses unitary RNNs, while we focus on orthogonal RNNs. Our choice to consider real-valued models is motivated by the fact that, from the viewpoint of model representation, it is equivalent to work with a unitary RNN or with an orthogonal RNN with a twice bigger hidden matrix \cite{Mhammedi2017}. Secondly, the update rule for the recurrent matrix in \cite{Jing2017} involves the multiplication by a diagonal matrix, which does not appear in our model. Finally, taking here the viewpoint of optimization on manifolds allows us to derive convergence guarantees for SRCD, while no counterparts of those results are given in \cite{Jing2017}. More precisely, we prove the convergence of SRCD under standard assumptions on the (mini-batch) gradient noise, stepsize and objective, for coordinates selected uniformly at random at each iteration. As a second contribution, we show numerically that the Riemannian gradient of the loss (with respect to the orthogonal parameter) has an approximately sparse representation in the basis considered for the tangent space, and propose a variant of SRCD in which the coordinate is selected using the Gauss-Southwell rule at each iteration. This is to our knowledge the first application of this rule to optimization problems defined on manifolds. We finally illustrate numerically the behavior of the proposed algorithms on a benchmark problem. Implementations of the proposed algorithms can be found at \url{https://github.com/EMassart/OrthCDforRNNs}.


\section{Stochastic coordinate descent on the orthogonal group} \label{sec:algo}
In this paper, without loss of generality, we address the optimization problem
\begin{equation} \label{eq: optiprob} \tag{P}
    \min_{X \in \R^{m \times n}, W \in \O_d} f(X,W),
\end{equation} 
where $\mathcal{O}_d := \{ W \in\mathbb{R}^{d \times d} : W^\top W = I_d\}$ is the set of $d \times d$ orthogonal matrices. For the specific framework of RNN training, the variable $X$ refers to the parameters $W_{\input}$, $W_{\output}$, $b_{\output}$ or any other unconstrained model parameter, while $W$ is the recurrent matrix. We assume throughout the paper that the dimension of $W$ is very large. Before introducing our algorithm, let us first summarize properties of the set of orthogonal matrices that we use in the rest of the paper.

\subsection{Geometry of the manifold of orthogonal matrices} \label{sec:geometry}
The set $\mathcal{O}_d$ of orthogonal matrices is a Riemannian manifold. Roughly speaking, a manifold is a topological set in which each neighbourhood can be set in one-to-one correspondence with an open set of the Euclidean space; in that sense, manifolds are sets that ``locally look flat'' (see \cite{AMS2008} or \cite{boumal2020intromanifolds} for more a formal definition). Typically, computations on manifolds are mostly done on the \emph{tangent space} of the manifold at a given point, a first-order approximation of the manifold around that point. For the orthogonal group, the tangent space $\mathcal{T}_W \mathcal{O}_d$  at some point $W \in \mathcal{O}_d$ is given by:
\begin{equation} \label{eq: tgSpace}
    \mathcal{T}_{W} \mathcal{O}_d = \{ W \Omega : \Omega \in \R^{d \times d}, \Omega = - \Omega^\top \}.
\end{equation}
This is a $D$-dimensional vector space, with $D = d(d-1)/2$; consistently, $\O_d$ is a $D$-dimensional manifold. Riemannian manifolds are manifolds whose tangent spaces are endowed with a Riemannian metric (a smoothly varying inner product, providing tangent vectors with a notion of length). As often for the manifold of orthogonal matrices \cite{AMS2008}, we choose as Riemannian metric the classical Euclidean metric: for each tangent space $\mathcal{T}_W \mathcal{O}_d$, and for all $\xi_W, \zeta_W \in \mathcal{T}_W \mathcal{O}_d$, the Riemannian metric between $\xi_W$ and $\zeta_W$ is given by: 
\begin{equation} \label{eq:metric}
    \langle \xi_{W}, \zeta_W\rangle := \tr{\xi_W^\top \zeta_W},
\end{equation}
the usual Frobenius inner product. In this paper, we therefore use the notation $\langle \cdot, \cdot \rangle$ to refer both to the Riemannian metric, and the Euclidean inner product. Considering this metric, it can easily be checked that the tangent space $\mathcal{T}_W \O_d$ is generated by the orthonormal basis \cite{Shalit2014}:
\begin{equation} 
\label{eq: basis}
\begin{aligned}
&\mathcal{B}_W := \{ \eta_i\}_{i = 1}^{D}, \text{where} \ \eta_{i} := W H_{j,l},
\end{aligned}
\end{equation}
with $j,l$ two indices\footnote{The following simply characterizes our numbering of the basis elements: the first coordinate vector is associated to the pair $(j = 1, l = 2)$, the second to the pair $(j = 1, l = 3)$ etc until $(j = d-1, l = d)$.} such that $1 \leq j < l \leq d$ and $i = \sum_{k = 1}^{j-1}(d-k) + (l-j)$, and where the set of matrices $\{H_{j,l}\}$, with $1 \leq j < l \leq d$ is an orthonormal basis for the vector space of $d \times d$ skew-symmetric matrices: 
\begin{equation} \label{eq: H}
    H_{j,l}  := \frac{1}{\sqrt{2}} \left( e_je_l^\top  - e_l e_j^\top \right),
\end{equation}
with $e_j \in \mathbb{R}^d$ the vector whose elements are all zero, except the $j$th component that is equal to one. Let us emphasize that each tangent space is thus endowed with a norm function, and that this norm function coincides with the Frobenius norm:
\begin{equation} \label{eq:norm}
    \| \xi_{W} \| =  \langle \xi_{W}, \xi_{W} \rangle^{\frac{1}{2}} = (\tr{\xi_W^\top \xi_W})^{\frac{1}{2}}.
\end{equation}
 The Riemannian gradient is the counterpart of the Euclidean gradient on manifolds. Given an arbitrary function $h : \mathcal{O}_d \to \mathbb{R}$, the Riemannian gradient of $h$ at $W \in \mathcal{O}_d$ is the unique vector $\grad h(W) \in \mathcal{T}_W \mathcal{O}_d$ that satisfies:
\[ \langle \xi_W, \grad h(W)\rangle  = \D h(W)[\xi_W] \qquad \forall \xi_W \in \T_{W} \O_d, \]
i.e., its inner product with any tangent vector $\xi_W$ gives the directional derivative of $h$ along the direction spanned by the tangent vector $\xi_W$, written here $\D h(W)[\xi_W]$. On the orthogonal group, the Riemannian gradient is simply computed\footnote{This follows from the fact that the orthogonal group is an embedded submanifold of $\R^{d \times d}$, see \cite[Chap. 3]{AMS2008} for more information.} as
\[ \grad h(W) = P_{\T_W \O_d} (\nabla^e h(W)), \]
where $\nabla^e  h(W)$ is the Euclidean gradient of $h$, where $h$ is seen as a function from $\R^{d \times d}$ to $\R$, and $P_{\T_W \O_d}(\cdot)$ is the orthogonal projection on the tangent space $\T_W \O_d$:
\begin{equation}
    P_{\T_W \O_d}(M) = W \left(\frac{W^\top M - M^\top W}{2}\right)
\end{equation}
for all $M \in \R^{d \times d}$. Given the basis \eqref{eq: basis}, one has a Riemannian counterpart to the notion of partial derivative. We define the $i$th Riemannian partial derivative of $h$, for the basis \eqref{eq: basis} of the tangent space $\mathcal{T}_W \O_d$, by:
\begin{equation} \label{eq: partial}
    \grad_{i} h(W) = \langle \grad h(W), \eta_i \rangle,
\end{equation}
where $\eta_i$ is the $i$th coordinate vector of the basis \eqref{eq: basis}. This Riemannian partial derivative can be obtained easily from the Euclidean gradient $\nabla^e h(W)$ (typically computed using backpropagation in DNN training):
\begin{equation}\label{eq:partialderiv}
   \grad_{i} h(W)  = \tr{H_{j,l}^\top W^{\top} \grad h(W)} = \tr{H_{j,l}^\top W^{\top} \nabla^e h(W)},
\end{equation}
where $j, l \in \{1, \dots, d\}$ are defined after \eqref{eq: basis}, and where the second equality comes from the fact that the Frobenius inner product of a skew-symmetric matrix ($H_{j,l}$) and a symmetric matrix ($W^{\top} (\grad h(W) - \nabla^e h(W)))$ is zero.

A last tool that is required in this work is the exponential map, an operator that allows to move on the manifold in a given direction. The exponential map $\mathrm{Exp}_W (\xi_W)$ returns the point obtained by evaluating at time $t = 1$ the geodesic (curve with no acceleration, i.e., straight lines in the Euclidean space), starting at  $W$, with initial velocity $\xi_W$. For the manifold of orthogonal matrices, the exponential map is given by
\begin{equation} \label{eq:exp}
    \mathrm{Exp}_W (\xi_W) = W \; \expm{ W^\top \xi_W}, \; \forall \xi_W \in \mathcal{T}_W \mathcal{O}_d,
\end{equation}
with $\expm{}$ the matrix exponential. Figure \ref{fig:opti_manifold} provides an abstract representation of a Riemannian gradient descent iteration.

\begin{figure}
    \centering
    \includegraphics[scale = 0.3, trim = 0 20 0 50, clip]{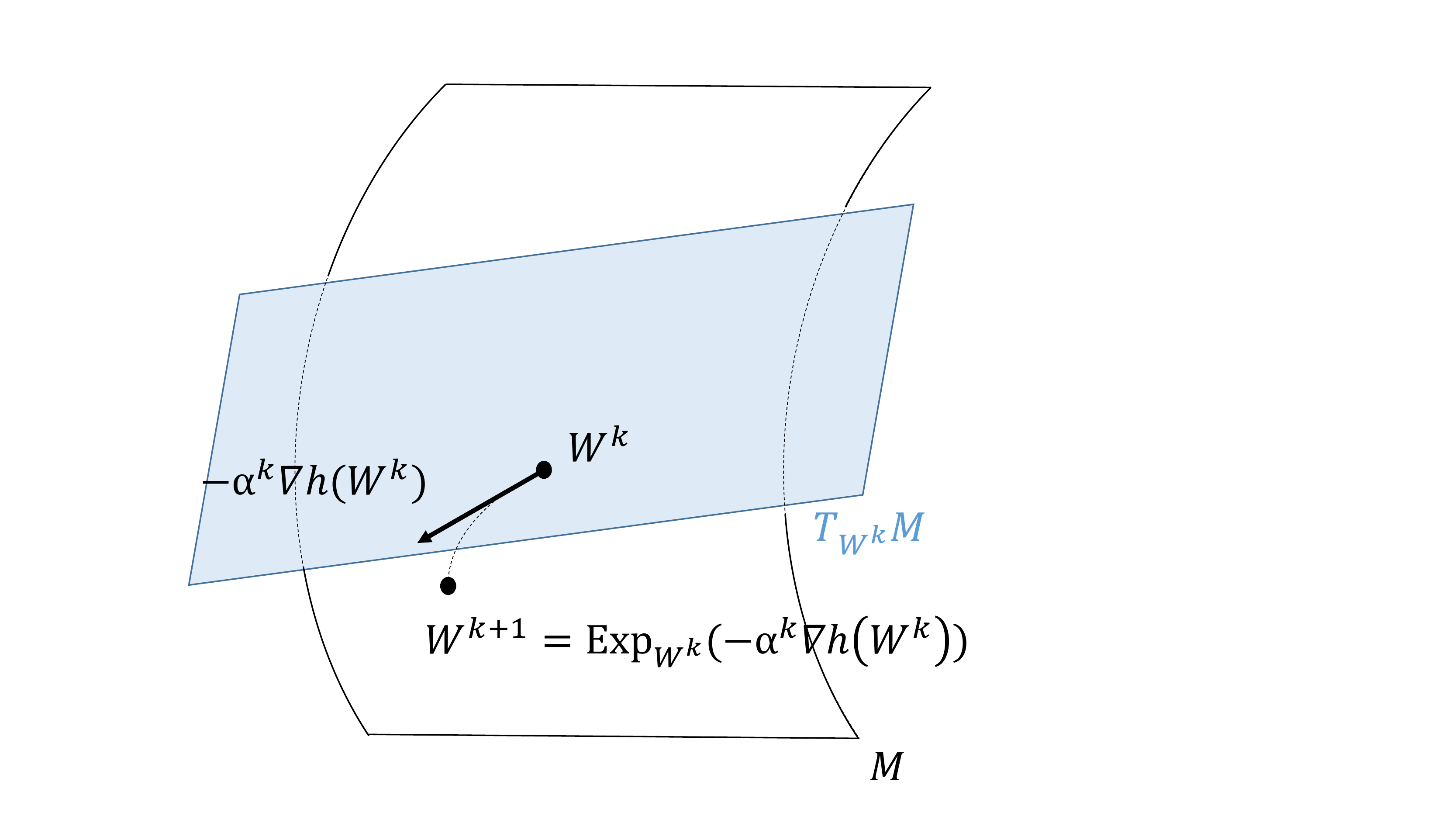}
    \caption{Graphical illustration of a Riemannian gradient descent iteration on a manifold $M$. At each iteration, we first compute the Riemannian gradient at the current iterate $W^k$ and then update the iterate by ``moving on the manifold in the direction opposite to the gradient'', using the exponential map. }
    \label{fig:opti_manifold}
\end{figure}

\subsection{A brief overview of Stochastic Riemannian Gradient Descent (SRGD)}
Building on the previous discussion, we now recall in Algorithm \ref{alg: RGD} the celebrated stochastic Riemannian gradient descent algorithm, initially proposed and analysed in \cite{Bonnabel2013}, that we apply here to \eqref{eq: optiprob}. We use this algorithm as a comparison point throughout this paper. Slightly departing from the notation introduced above, and in order to simplify the notation, we use hereafter the notation $g_X^k$ and $g_W^k$ to refer respectively to the gradient of $f$ with respect to the unconstrained variable $X$ and the (Riemannian) gradient of $f$ with respect to $W$ at iteration $k$, and $\tilde g_X^k$ and $\tilde g_W^k$ for their stochastic counterparts. In other words, the full exact and stochastic (Riemannian) gradient of $f$ are simply
\[  g^k = \begin{pmatrix}
g_X^k \\
g_W^k \\
\end{pmatrix} \qquad \text{and} \qquad \tilde{g}^k = \begin{pmatrix}
\tilde{g}_X^k \\
\tilde{g}_W^k \\
\end{pmatrix}.\]
In the case when $f$ is a sum of a large number of functions, as in our RNN training application, $\tilde{g}_{X}^k$ and $\tilde{g}_{W}^k$ can be seen as the approximations of $g_X^k$ and $g_W^k$ computed over a mini-batch. At each iteration, the stochastic gradients $\tilde{g}_X^k$ and $\tilde{g}_W^k$ are computed, and the iterates $X^k$, $W^k$ are updated. Note that we evaluate the exponential map \eqref{eq:exp} at each iteration, involving a matrix exponential whose cost evolves cubically with $d$ (PyTorch matrix exponential implementation currently relies on Taylor/ Pad\'e approximants and the scaling-squaring trick, see \cite{Bader2019}). 

\begin{algorithm}[t]
	\caption{SRGD: Stochastic Riemannian Gradient Descent}
	\label{alg: RGD}
	\begin{algorithmic}[1]
		\State Let $\{\alpha^k\}$ be a sequence of stepsizes. Set $k = 0$, and initialize the unconstrained and orthogonal variables $X^0 \in \R^{m \times n}, W^0 \in \mathcal{O}_{d}$. 
		\While{not converged} 
		\State Compute the (stochastic) gradients $\tilde{g}_X^k$ and $\tilde{g}_W^k$
		\State Update the unconstrained variable: $X^{k+1} = X^k - \alpha^k \tilde{g}_X^k$
		\State Update the constrained variable: $W^{k+1} =  \mathrm{Exp}_{W^k} (-\alpha^k \tilde{g}_{W}^k)$
		\State $k := k+1$
		\EndWhile
	\end{algorithmic}
\end{algorithm}

\subsection{Proposed algorithm: Stochastic Riemannian Coordinate Descent (SRCD)}
Algorithm \ref{alg: RCD} presents our proposed SRCD algorithm. At each iteration, the unconstrained parameters are updated using stochastic gradient descent, exactly as in Algorithm \ref{alg: RGD}; the modification is in the update rule for the orthogonal parameter. Here, the iterate is only updated along one coordinate of the tangent space, for the basis \eqref{eq: basis}. This amounts to rotating a pair of columns of the iterate $W^k$ of an angle that depends on the stepsize and the component of the gradient along that coordinate. The stochastic gradient $\tilde{g}_W^k$ is thus replaced by the following tangent vector, which is aligned with the $i^k$th coordinate vector $\eta_{i^k}$ of the tangent space $\mathcal{T}_{W^k} \O_d$: 
\begin{equation} \label{eq: tildegcoord}
    \tilde g_{W,i^k}^k = \langle \tilde{g}_W^k, \eta_{i^k} \rangle \eta_{i^k},
\end{equation} 
where $\eta_{i^k} = W^k H_{j^k,l^k}$ for some given $j^k,l^k$ defined below \eqref{eq: basis}. 

\begin{algorithm}[t]
	\caption{SRCD: Stochastic Riemannian Coordinate Descent}
	\label{alg: RCD}
	\begin{algorithmic}[1]
		\State Let $\{\alpha^k\}$ be a sequence of stepsizes. Set $k = 0$, and initialize the unconstrained and orthogonal variables $X^0 \in \R^{m \times n}, W^0 \in \mathcal{O}_{d}$. 
		\While{not converged} 
		\State Compute the (stochastic) gradient $\tilde{g}_X^k$
		\State Update the unconstrained variable: $X^{k+1} = X^k - \alpha^k \tilde{g}_X^k$
		\State Select a coordinate $i^k \in \{1, \dots, D\}$ of the tangent space $\mathcal{T}_{W^k}(\O_d) $
		\State Compute the (stochastic) partial derivative $\tilde{g}_{W, i_k}^k$ using \eqref{eq:partialderiv} \label{algline:partialderiv}
		\State Update the orthogonal variable: $W^{k+1} =  \Exp_{W^k}( - \alpha^k \tilde{g}_{W, i_k}^k)$. Due to the structure of $\tilde{g}_{W, i_k}^k$, this step simply involves the multiplication of $W^{k}$ by a Givens matrix, see \eqref{eq: givensupdate}, and has a computational cost evolving linearly with $d$. \label{line:coordinateStep}
		\State $k := k+1$
		\EndWhile
	\end{algorithmic}
\end{algorithm}

Note that, since the matrix $H_{j^k, l^k}$ has a very special structure, see \eqref{eq: H}, the exponential map in Line \ref{line:coordinateStep} can be written as a multiplication by a Givens matrix. Indeed, writing $\theta^k = \langle \tilde{g}_W^k, \eta_{i^k} \rangle$, there holds 
\begin{equation} \label{eq: givensupdate}
    \mathrm{Exp}_{W^k}(\theta^k \eta_{i^k}) = \mathrm{Exp}_{W^k}(\theta^k W^k H_{j^k,l^k}) = W^k G_{j^k,l^k}(\theta^k),
\end{equation}
with 
\begin{equation}
    G_{j^k,l^k}(\theta^k) := \begin{pmatrix}  
    1 &\cdots &0 &\cdots &0 &\cdots &0 \\  
    \vdots & \ddots & \vdots &\vdots & \vdots &\vdots & \vdots \\
    0 &  \cdots & \cos(\theta^k)& \cdots  & \sin(\theta^k) &\vdots & 0 \\
    \vdots &  & \vdots & \ddots  & \vdots  & & \vdots \\
    0 &  \cdots & -\sin(\theta^k)& \cdots  & \cos(\theta^k) &\vdots & 0 \\
    \vdots &   & \vdots &  & \vdots &\ddots & \vdots \\
    0 &  \cdots & 0& \cdots  & 0 &\cdots & 1 \\
    \end{pmatrix}
\end{equation}
a Givens matrix \cite{Shalit2014}. Right-multiplying the iterate $W^k$ with $G_{j^k,l^k}(\theta^k)$ has the effect of rotating clockwise the $j^k$th and $l^k$th columns of $W^k$, with rotation angle $\theta^k$. In particular, since Givens matrices are sparse (having only 4 nonzero elements), the evaluation of the right-hand side of \eqref{eq: givensupdate} has a cost of about $6d$ flops, which is much lower than the cost of the matrix exponential that is used when updating the $W$ variable in SRGD.

Algorithm \ref{alg: RCD} also requires a strategy for selecting the coordinate at each iteration. Two well-known strategies in the Euclidean setting are uniform sampling, where $i^k$ is sampled independently and uniformly among $\{1, \dots, D\}$ at each iteration (with $D = d(d-1)/2$ the dimension of the manifold), and the Gauss-Southwell rule, where $i^k$ is the coordinate associated to the fastest local variation of the objective:  
\begin{equation} \label{eq: GS}
    i^k = \underset{i \in \{1, \dots, D\}}{ \argmax} \| g_{W,i}^k \|.
\end{equation}
We compare both strategies numerically in Section \ref{sec:numerics}.

\subsection{Convergence analysis} \label{sec:convergence}
Our convergence analysis heavily relies on the convergence analysis of stochastic gradient descent in the Euclidean setting (see \cite[Chap. 4]{Bottou2018}). The main contribution of the convergence analysis is a careful extension to the Riemannian setting, and to coordinates updates for one of the variables. First, let us introduce the following smoothness assumption on the function $f$, following \cite{Boumal2018}. 

\begin{assumption} \label{ass:smooth}
The function $f : \R^{m \times n} \times \O_d \to \R$ is $L$-smooth, i.e., satisfies for all $(X,W) \in \R^{m \times n} \times \O_d$ and $(\nu, \mu) \in \R^{m \times n} \times \T_{W}\O_d$:
\[ \left| f(X+\nu,\Exp_{W}(\mu)) - f(X,W) - \langle g_X(X,W), \nu \rangle  - \langle g_W(X,W), \mu \rangle \right| \leq \frac{L}{2} \left( \|\nu\|^2 + \| \mu \|^2\right). \]
\end{assumption}

Assumption \ref{ass:smooth} is guaranteed to be satisfied if the Euclidean gradient of the function $f$ (when the latter is seen as a function on $\R^{m \times n} \times \R^{d \times d}$) is Lipshitz-continuous, with Lipshitz constant $L$. This follows from Lemma 7 in \cite{Boumal2018}, noticing that the proof of that lemma still holds when the manifold is a product manifold of a compact manifold (here the orthogonal group) and a Euclidean space\footnote{Actually, we just need the gradient of the function to be Lipshitz continuous on $\R^{m \times n} \times \mathrm{Conv}(\mathcal{O}_d) \subset \R^{m \times n} \times \R^{d \times d}$, where $\mathrm{Conv}(\mathcal{O}_d)$ is the convex hull of $\O_d$.}.

Our second assumption is classical when analysing stochastic gradient descent in the Euclidean setting, see, e.g., \cite[Chap. 4]{Bottou2018}. Let us write hereafter $\calF^k = \{ \tilde g_X^0, \tilde g_W^0, i^0, \dots, \tilde g_X^{k-1}, \tilde g_W^{k-1}, i^{k-1}\}$, the $\sigma$-algebra generated by the random variables before iteration $k$.

\begin{assumption} \label{ass:approxGradient}
The gradients and iterates of the algorithm satisfy the conditions:
\begin{enumerate}
    \item The sequence of iterates $\{ (X^k, W^k)\}_{k \in \mathbb{N}}$ is contained in an open set over which the value of the function $f$ is lower bounded by some $f_{\inf}$,
    \item There exist $\mu_X, \mu_W > 0$ such that for all $k \in \mathbb{N}$ and $Z \in \{X, W\}$,
    \[ \langle g_Z^k, \E \left[ \tilde{g}_Z^k | \calF^{k} \right] \rangle \geq \mu_Z \| g_Z^k \|^2.\]
    \item There exists $C_X, C_W \geq 0$ and $M_X, M_W \geq 0$ such that, for all $k \in \N$ and for $Z \in \{X, W\}$,
    \[\E \left[ \| \tilde{g}_Z^k\|^2 | \calF^{k}\right] \leq C_Z + M_Z \| g_Z^k\|^2.\]
\end{enumerate}
\end{assumption}

Under these assumptions, we prove the following result, which is analogous to \cite[Thm. 4.10]{Bottou2018}.
\begin{theorem}[Convergence result]
Under Assumptions \ref{ass:smooth} and \ref{ass:approxGradient}, the sequence of iterates $\{(X^k, W^k)\}$ generated by Algorithm \ref{alg: RCD}, with coordinate $i^k$ selected for each $k$ uniformly at random among $\{1, \dots, D\}$, with $D = d(d-1)/2$ the dimension of $\O_d$,  and using a sequence of stepsizes $\alpha^k$ that satisfies the Robbin-Monro conditions \cite{Robbins1951}:
\begin{equation} \label{eq: step}
    \lim_{k \to \infty} \sum_{i = 0}^k \alpha^k = \infty \qquad \lim_{k \to \infty} \sum_{i = 0}^k (\alpha^k)^2 = 0
\end{equation}
satisfies 
\begin{align}
    \mathbb{E} \left[ \frac{1}{\sum_{k = 0}^K \alpha^k} \sum_{k = 0}^K \alpha^k \| g^k \rVert^2 \right] \to 0 \qquad \text{as} \ K \to \infty,
\end{align}
where $g^k = [ g_X^{k \top} g_W^{k \top}]^\top$ is the (Riemannian) gradient of the objective at iterate $(X^k,W^k)$.
\end{theorem}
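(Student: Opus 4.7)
The plan is to adapt the classical analysis of SGD in \cite{Bottou2018} (their Theorem 4.10) to the Riemannian product $\R^{m\times n} \times \O_d$, with the additional twist that only one tangent coordinate of the orthogonal factor is updated at each step. The main adjustments are that the per-iteration decrease has to be obtained from the Riemannian smoothness of Assumption \ref{ass:smooth} rather than a Euclidean Taylor expansion, and that the conditional expectation must be taken in the correct order over the two independent sources of randomness: the mini-batch (giving $\tilde g_X^k, \tilde g_W^k$) and the coordinate selection $i^k$.

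The first concrete step is to apply Assumption \ref{ass:smooth} with $\nu = -\alpha^k \tilde g_X^k$ and $\mu = -\alpha^k \tilde g_{W,i^k}^k$, yielding the one-step bound
\begin{align*}
f(X^{k+1},W^{k+1}) - f(X^k,W^k) &\leq -\alpha^k \langle g_X^k, \tilde g_X^k\rangle - \alpha^k \langle g_W^k, \tilde g_{W,i^k}^k\rangle \\
&\quad + \tfrac{L(\alpha^k)^2}{2}\bigl(\|\tilde g_X^k\|^2 + \|\tilde g_{W,i^k}^k\|^2\bigr).
\end{align*}
Next I would condition on $\calF^k$ and on $\tilde g_W^k$, and average over the uniform coordinate draw. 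Because $\{\eta_i\}_{i=1}^D$ is orthonormal and $i^k$ is independent of $\tilde g_W^k$ given $\calF^k$, the key identities are
\[ \E_{i^k}[\tilde g_{W,i^k}^k] \;=\; \tfrac{1}{D}\tilde g_W^k, \qquad \E_{i^k}\bigl[\|\tilde g_{W,i^k}^k\|^2\bigr] \;=\; \tfrac{1}{D}\|\tilde g_W^k\|^2, \]
so the coordinate choice contributes a factor $1/D$ to both the useful-gradient term and the noise term of the $W$-factor. A subsequent conditional expectation over $\calF^k$, using items 2 and 3 of Assumption \ref{ass:approxGradient} for both variables, produces the clean recursion
\[ \E[f^{k+1}-f^k \mid \calF^k] \leq -\alpha^k\mu_X \|g_X^k\|^2 -\tfrac{\alpha^k \mu_W}{D}\|g_W^k\|^2 + \tfrac{L(\alpha^k)^2}{2}\!\left(C_X + M_X\|g_X^k\|^2 + \tfrac{C_W + M_W\|g_W^k\|^2}{D}\right). \]

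Setting $\bar\mu = \min(\mu_X, \mu_W/D)$ and $\bar C = C_X + C_W/D$, and using $\alpha^k \to 0$ (forced by the square-summability condition) to absorb the quadratic-in-$\alpha$ coefficients of $\|g_X^k\|^2$ and $\|g_W^k\|^2$ into the negative linear terms for $k$ large enough, one obtains
\[ \E[f^{k+1}-f^k \mid \calF^k] \leq -\tfrac{\bar\mu}{2}\alpha^k \|g^k\|^2 + \tfrac{L\bar C}{2}(\alpha^k)^2. \]
Taking total expectation, telescoping from $k=0$ to $K$, and using the lower bound $f \geq f_{\inf}$ from Assumption \ref{ass:approxGradient}.1 gives $\tfrac{\bar\mu}{2}\sum_{k=0}^K \alpha^k \E[\|g^k\|^2] \leq f(X^0,W^0) - f_{\inf} + \tfrac{L\bar C}{2}\sum_{k=0}^K (\alpha^k)^2$. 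Dividing both sides by $\sum_{k=0}^K \alpha^k$ and invoking the Robbins--Monro conditions (numerator bounded, denominator diverging) delivers the announced limit.

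The most delicate step is expected to be the bookkeeping of the two layers of conditioning: one must verify that $i^k$ is independent of $\tilde g_W^k$ given $\calF^k$ so that the inner expectation over $i^k$ factorises cleanly, and handle the early iterates where $\alpha^k$ may not yet be small enough to absorb the quadratic terms (typically by splitting the sum into a finite head, which contributes a bounded constant, and a tail where the absorption is valid). The Riemannian character of the update is, by contrast, essentially invisible beyond the use of Assumption \ref{ass:smooth}, which already packages the geometry of the exponential map into a single one-step inequality on the product manifold.
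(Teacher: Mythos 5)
Your proposal is correct and follows essentially the same route as the paper's own proof: the one-step descent inequality from Assumption 1, the factor-of-$1/D$ identities obtained by averaging the coordinate projection over the orthonormal basis $\{\eta_i\}$, the same constants $\min\{\mu_X,\mu_W/D\}$ and $C_X + C_W/D$, absorption of the quadratic term for small enough stepsize, telescoping, and the Robbins--Monro conditions. The only cosmetic difference is that you split the conditioning over mini-batch and coordinate explicitly and handle early iterates by splitting the sum, where the paper simply assumes $\alpha^k L M \leq \mu$ without loss of generality.
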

\begin{proof}
See supplementary material.
\end{proof}

\section{Numerical experiments} \label{sec:numerics}


We consider in this paper the copying memory task, which consists in remembering a sequence of letters from some alphabet $\mathcal{A} = \{ a_k\}_{k = 1}^N$. The model is given a sequence

\[{\large (\underbrace{a_{s_1}, a_{s_2}, \dots, a_{s_K}}_{\text{Sequence to remember}}, \underbrace{b, b, \dots, b}_{\text{$L$ blank symbols}}, \underbrace{c}_{\text{Start symbol}}, \underbrace{b, \dots, b}_{\text{$K$-1 blank symbols}}),}\]
with $a_{s_i} \in \mathcal{A}$ for all $i \in \{1, \dots, K\}$, and $b$, $c$ two additional letters that do not belong to the alphabet (respectively the ``blank'' and ``start'' symbols). The whole input sequence has total length $L+2K$; the first $K$ elements are the elements to remember. Once the model reads the ``start'' symbol, it should start replicating the sequence it has memorized. For the input sequence given above, the output of the network should thus be the sequence $(b, \dots, b, a_{s_1}, a_{s_2}, \dots, a_{s_K})$, made of $L+K$ replications of the blank letter followed by the sequence the model was asked to memorize. We rely on \url{https://github.com/Lezcano/expRNN} for the PyTorch implementation of the model architecture and of the copying memory task. In particular, the alphabet $\mathcal{A}$ comprises of 9 different elements, $L = 1000$, $K = 10$, the batchsize is equal to 128, and the recurrent matrix is an orthogonal matrix of size $190 \times 190$. The loss for this problem is the cross-entropy.


\begin{figure}[t!]
    \centering
    \includegraphics[scale = 0.7]{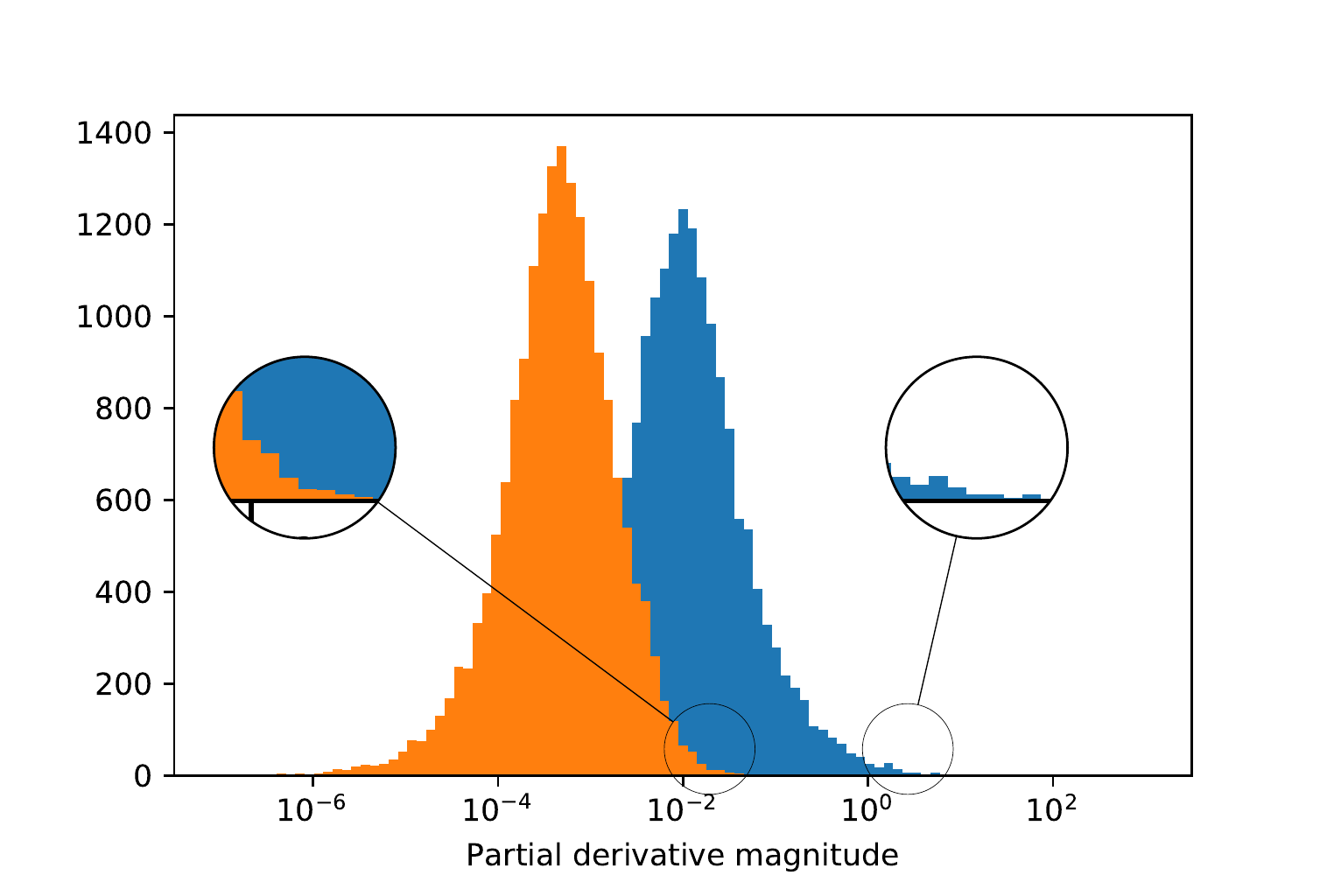}
    \caption{Histograms of the magnitude of the partial derivatives of the loss for the copying memory problem, both at initialization (blue) and after 500 iterations (orange). Note that a few partial derivatives are dominating by a couple of orders of magnitude. }
    \label{fig:partialCopyingGradient}
\end{figure}

\paragraph{Almost sparsity of the recurrent gradient.} Let us first illustrate that the gradient of the loss with respect to the orthogonal parameter (the recurrent weight matrix) has an approximately sparse representation in the basis \eqref{eq: basis}. Figure \ref{fig:partialCopyingGradient} illustrates the repartition of the Riemannian partial derivatives \eqref{eq: partial} (in absolute value, and computed over a minibatch), both at the initialization and after 500 training iterations. This figure indicates that a few partial derivatives have absolute value about two orders of magnitude larger than the bulk of partial derivatives. This observation supports the choice of the Gauss-Southwell coordinate selection strategy proposed in this paper. In this experiment, at initialization,  $0.1\%$ (resp. $4.2\%$) of the coordinates represent $95\%$ (resp. $99\%$) of the norm of the Riemannian gradient. After 500 iterations, $9\%$ (resp. $28.4\%$) of the coordinates represent $95\%$ (resp. $99\%$) of the norm of the Riemannian gradient.


\paragraph{Comparison of the algorithms.}
To illustrate further the benefits of our approach, we compare here SRGD with two variants of SRCD, in which the coordinate is selected at each iteration uniformly at random (SRCD-U) or using the Gauss-Southwell rule (SRCD-GS). For the sake of completeness, we also consider a block version of SRCD-GS, in which a few coordinates are selected at each iteration (the block size was here set to $0.5\%$ of the total number of coordinates in the tangent space).\footnote{Note that, in order for the update rule of the orthogonal parameter to be expressible in terms of Givens matrices, the selected coordinates have to correspond to disjoint pairs of columns. As this experiment simply aims to compare the methods in terms of accuracy, the matrix exponential was used to avoid here this constraint.} Though typically state-of-the-art algorithms addressing this problem rely on adaptive stepsizes, we compare here these algorithms using a fixed stepsize, so that the gap between the training loss of SRGD and SRCD-GS or SRCD-U gives us a measure of how harmful the restriction of the gradient to one/some coordinates is for the optimization. Figure \ref{fig:comparison} shows the evolution of the loss for the different algorithms, using a fixed stepsize set to $2 \cdot 10^{-4}$. Overall, our numerical experiments illustrate a good initial behavior of our proposed algorithms compared to SRGD, with a fast decrease of the loss over the first iterations. Though SRCD-GS outperforms SRCD-U over the first iterations, these two methods perform similarly over a larger number of iterations (the curve for SRCD-U has been averaged over ten runs). Updating at each iteration a block of coordinates improves significantly the performance, and provides a decrease of the loss that is very close to SRGD. Note also that, following our discussion on the approximately sparse representation of the stochastic Riemannian gradient in the basis \eqref{eq: basis}, it might be interesting to increase progressively the number of coordinates during the optimization; this is left for future research. 

\begin{figure}[t!]
    \centering
    \includegraphics[scale = 0.7,trim = 10 0 10 0, clip]{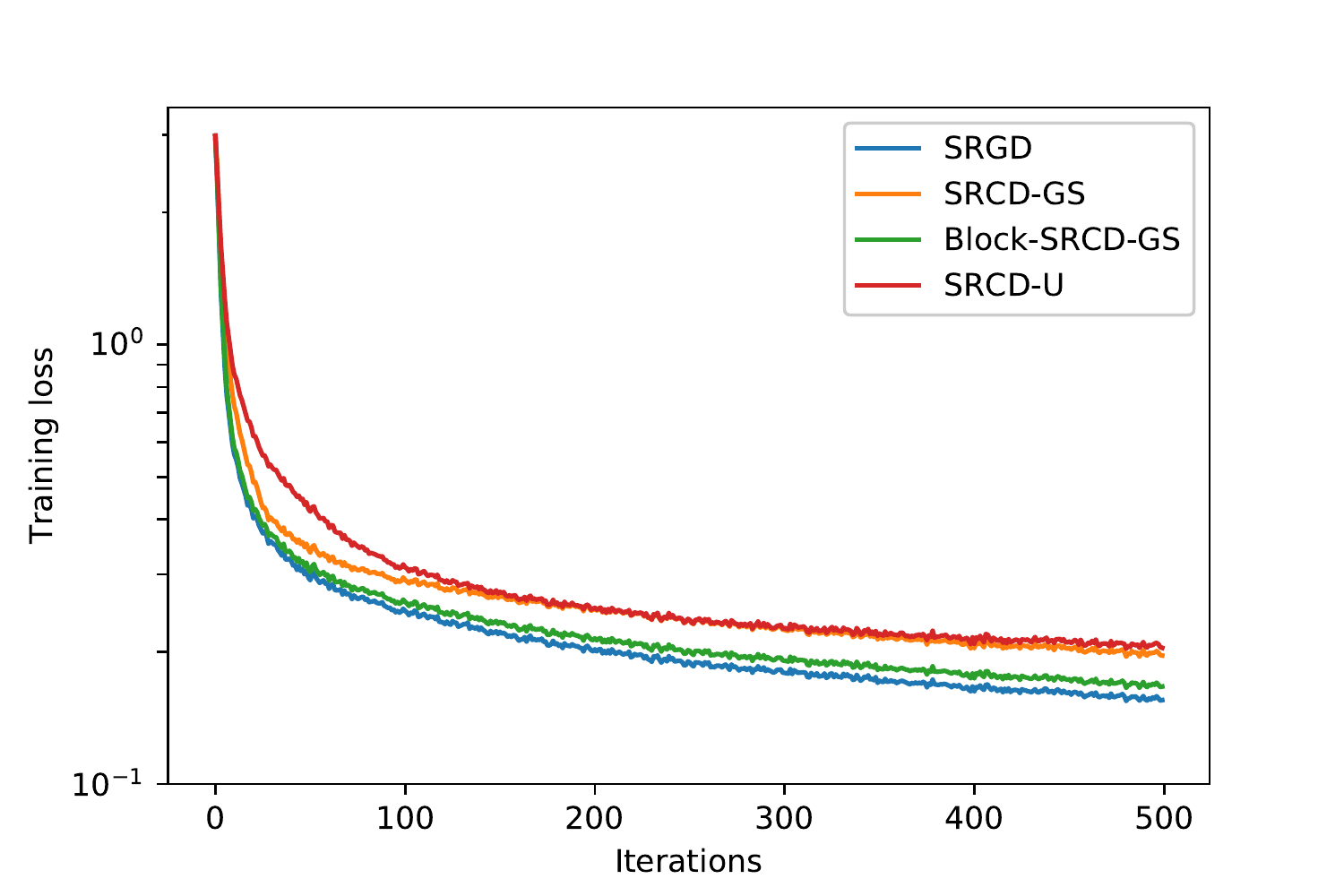}
    \caption{Comparison of stochastic Riemannian gradient descent (SRGD) and stochastic Riemannian coordinate descent (SRCD) on the copying memory problem. Note that a decrease of the loss comparable to SRGD can be obtained by restricting the update to one/some coordinates in the tangent space, resulting in a cost per iteration that evolves linearly with $d$, instead of cubically for SRGD.}
    \label{fig:comparison}
\end{figure}

\paragraph{Comparison in terms of computational cost.} Table~\ref{tab:time} presents the cost per iteration/update (with CUDA synchronization) using different optimizers for the copying memory problem (see supplementary material for experimental details). Though we would have expected SRCD to be significantly faster than SRGD, the computational cost of both methods is actually very close. Table~\ref{tab:time} indicates that the cost per iteration is indeed dominated by the cost of the backpropagation step (computing the Euclidean gradients with respect to the parameters) and that the cost of the parameter update is negligible in comparison. Actually, we didn't even notice any significant increase of computational time when imposing orthogonal constraints on the recurrent matrix via stochastic Riemannian gradient descent, compared to vanilla stochastic gradient descent.  Further experiments indicate that the cost of the matrix exponential evaluation and the cost of the backpropagation becomes more and more comparable as the size of the recurrent matrix increases. Hence, we argue that the most beneficial regime for the proposed algorithm is the very large dimensional regime, where the gap in accuracy per iteration between SRGD and SRCD (see Fig.~\ref{fig:comparison}) is expected to be compensated by the cost savings  per iteration. A detailed study is out of the scope of this paper and we defer it to future work.

\begin{table}[t!]
\centering
\caption{Average run-time cost per iteration/update of the iterates for the copying memory problem using the different optimizers considered in this paper}
\begin{tabular}{|c|c|c|c|c|}
\hline
\multicolumn{1}{|c|}{\multirow{2}{*}{Process}} & \multicolumn{4}{c|}{Optimizer} \\ \cline{2-5} 
\multicolumn{1}{|c|}{}  & SGD  & SRGD & SRCD-GS & SRCD-U \\ \hline
loss.backward() + optim.step() &   0.3125   &  0.3311 & 0.3191 &  0.3429 \\ \hline
optim.step()                &   0.00164   &  0.00197    &   0.00177  &  0.00223    \\ \hline
\end{tabular}
\label{tab:time}
\end{table}

\section{Conclusions}
We have proposed SRCD, a new algorithm for orthogonal RNN training with computational cost per iteration in $\mathcal{O}(d)$, in contrast with the $\mathcal{O}(d^3)$ cost of SRGD. We proved the convergence of our proposed algorithm under typical assumptions on the training problem (Lipshitz smoothness of the objective, classical assumptions on the gradient noise and stepsizes satisfying the Robbin-Monro conditions), for coordinates selected uniformly at random in $\{1, \dots, D\}$, with $D = d(d-1)/2$ the manifold dimension. We have also shown numerically that the Riemannian gradient has an approximately sparse representation in the basis \eqref{eq: basis}, and leveraged this observation by proposing a Gauss-Southwell coordinate selection rule. Numerical experiments indicate that the proposed rule leads to a faster initial decrease of the loss, compared to the uniform selection rule. Future research could aim to endow the proposed optimizer with an adaptive stepsize such as Adam or RMSProp, following the recent attempts \cite{Kasai2019, Becigneul2019, Lezcano2020} for developing adaptive stepsizes on manifolds.

\begin{ack}
The first author's work was supported by the National Physical Laboratory, Teddington, UK.
\end{ack}

\bibliographystyle{unsrt}
\bibliography{references}

\section{Appendices}
\subsection{Proof of Theorem 1}
Our proof heavily relies on the proof of \cite[Theorem 4.10]{Bottou2018}, namely, the convergence proof of stochastic gradient descent for nonconvex objectives, in the Euclidean setting. Let us write $f^k = f(X^k, W^k)$, $\nu^k = - \alpha^k \tilde{g}_X^k$ and $\mu^k = - \alpha^k \tilde{g}_{W,i^k}^k$. Under Assumption 1, there holds:
\begin{equation*}
\begin{aligned}
    f^{k+1} = f(X^k+\nu^k,\Exp_{W^{k}}(\mu^k)) &\leq  f^k - \alpha^k \langle g_X^k, \tilde{g}_X^k  \rangle - \alpha^k \langle g_W^k, \tilde{g}_{W,i^k}^k  \rangle 
    + \frac{L}{2} (\alpha^k)^2 \left( \| \tilde{g}_X^k \|^2 + \| \tilde{g}_{W,i^k}^k \|^2 \right).
    \end{aligned}
\end{equation*}
Taking the conditional expectation on both sides, and using the fact that $f^k$ and the exact gradients $g_X^k$ and $g_W^k$ are $\calF^k$-measurable, we get:
\begin{equation*}
\begin{aligned}
    \E \left[ f^{k+1} | \calF^k \right] &\leq  f^k - \alpha^k \E \left[  \langle g_X^k, \tilde{g}_X^k  \rangle | \calF^k \right] - \alpha^k \E \left[ \langle g_W^k, \tilde{g}_{W,i^k}^k  \rangle | \calF^k \right]\\
    &+ \frac{L}{2} (\alpha^k)^2 \E \left[  \| \tilde{g}_X^k \|^2 + \| \tilde{g}_{W,i^k}^k \|^2 | \calF^k \right].
    \end{aligned}
\end{equation*}
Though some of the terms of the right-hand side of this last inequality can be directly lower bounded using Assumption 2, note that the conditional expectations of $\langle g_W^k, \tilde{g}_{W,i^k}^k  \rangle$ and $ \| \tilde{g}_{W,i^k}^k \|^2$, involve jointly randomness in the gradient and in the coordinate selection. Since the coordinate $i^k$ is selected uniformly at random among $\{1, \dots, D\}$, there follows:
\begin{equation*}
\begin{aligned}
    \E \left[ \langle g_W^k, \tilde{g}_{W,i^k}^k  \rangle  | \calF^k \right] 
    = \frac{1}{D}  \E \left[ \sum_{i = 1}^D \langle \tilde g_W^k, \eta_i \rangle \langle g_W^k, \eta_i  \rangle  | \calF^k \right] = \frac{1}{D}  \E \left[ \langle \tilde{g}_W^k, g_W^k \rangle  | \calF^k \right] \geq \frac{1}{D} \mu_W \|g_{W}^k \|^2.
    \end{aligned}
\end{equation*}
where we used Assumption 2 and the fact that the set of vectors $\{\eta_i\}$ with $i \in \{1, \dots, D\}$ is an orthonormal basis of $\mathcal{T}_{W^k} \O_d$. Similarly, 
\begin{equation*}
\begin{aligned}
    \E \left[ \| \tilde{g}_{W,i^k}^k \|^2|\calF^{k}\right]  = \frac{1}{D}  \E \left[ \sum_{i  = 1}^D \langle \tilde{g}_{W}^k, \eta_i\rangle^2 | \calF^k \right] = \frac{1}{D}  \E \left[ \|\tilde{g}_{W}^k \|^2 | \calF^k \right] \leq \frac{C_W}{D} + \frac{M_W}{D} \|g_{W}^k \|^2.
    \end{aligned}
\end{equation*}
Defining
\[\mu := \min \{\mu_X, \mu_W/D\}, \quad \ C:= C_X+ \frac{C_W}{D}, \quad M := \max \{M_X, M_W/D\}, \]
there follows
\[ \E \left[ f^{k+1} | \calF^k \right] \leq  f^k - \alpha^k (\mu - \alpha^k \frac{L}{2} M)  \| g^k \|^2 + \frac{L}{2} (\alpha^k)^2 C. \]
Note that, since $\alpha^1, \alpha^2, \dots$ is a decreasing sequence, one can without loss of generality assume that $\alpha^k L M \leq \mu$ so that 
\[  \E \left[ f^{k+1} | \calF^k \right] \leq   f^k - \alpha^k \frac{\mu}{2}  \| g^k \|^2 + \frac{L}{2} (\alpha^k)^2 C . \]
Slightly rewriting this, we get:
\[ f^k - \E \left[f^{k+1}|\calF^{k}\right] \geq  \left( \alpha^k \frac{\mu}{2} \| g^k \|^2 - \frac{L}{2} (\alpha^k)^2 C \right). \]
Taking the total expectation of the previous inequality yields: 
\[  \E \left[ f^k \right] - \E \left[f^{k+1}\right] \geq  \left( \alpha^k \frac{\mu}{2} \E \left[ \| g^k \|^2 \right] - \frac{L}{2} (\alpha^k)^2 C \right). \]
We then sum both sides of this inequality over $k \in \{0, \dots, K\}$ and used Assumption 2.1 to obtain 
\[ \E \left[f^0\right] - f_{\inf} \geq  \E \left[ f^0 \right] - \E \left[f^{K+1}\right] \geq \sum_{k = 0}^K \left( \alpha^k \frac{\mu}{2} \E \left[ \| g^k \|^2 \right] - \frac{L}{2} (\alpha^k)^2 C \right), \]
which yields
\[ \sum_{k = 0}^K \alpha^k \E\left[\| g^k\|^2 \right] \leq 2 \left(\frac{\E\left[ f^0\right] - f_{\inf}}{\mu}\right) +  \frac{LC}{\mu} \sum_{k = 0}^K(\alpha^k)^2. \]
The Robbin-Monro conditions imply then that 
\[ \lim_{K \to \infty} \sum_{k = 0}^K \alpha^k \E\left[\| g^k\|^2 \right] < \infty. \]
The claim simply follows from the fact that $\lim_{K \to \infty} \sum_{k = 0}^K \alpha^k = \infty$.
\vspace{2em}

\subsection{Experimental setting for the numerics}
\paragraph{Initialization and structure of the network: } The network architecture is described by the following signal propagation equations:

\begin{equation} \label{eq:network}
    \left\{ \begin{array}{rcl}
    h(t+1) & = &\phi(W_{\input} x(t+1) + W h(t)), \\ 
    y(t+1) & = & W_{\output} h(t+1) + b_{\output}.
    \end{array} \right.
\end{equation}

The recurrent matrix $\nbf{W}$ is initialized as proposed by Henaff et al \cite{Henaff2016} and used in, e.g., \cite{Lezcano2019}: 
\[ \nbf{W}^0 = (\nbf{I} + \nbf{A})^{-1} (\nbf{I} - \nbf{A}), \]
the Cayley transform of a block-diagonal matrix $\nbf{A}$, whose diagonal blocks are of the form
\[  \begin{pmatrix}
0 & s \\
-s & 0
\end{pmatrix},\]
with $s$ sampled independently for each block over the uniform distribution $\mathcal{U}[-\pi, \pi]$. Following the experimental setting of \cite{Lezcano2019}, we choose the modrelu activation proposed in \cite{Arjovsky2016}. The input-to-hidden and hidden-to-output weight matrices $W_{\input}$ and $W_{\output}$ are initialized using the He initialization \cite{He2015}, the components of the $b$ parameter characterizing the modrelu are initially randomly sampled from the uniform distribution $\mathcal{U}([-0.01, 0.01])$, and the bias vector $b_{\output}$ of the output layer is initialized at zero. 

\paragraph{Runtime benchmarking of the optimizers: }
In order to benchmark the different optimizers considered in this study, we ran experiments on a network with recurrent matrix of size $10^3\times 10^3$ trained with a batch-size of $128$ on a workstation configured with Intel(R) Xeon(R) 4114 CPU and P100 GPU. The results are averaged over 50 trials with CUDA(R) synchronization and 1000 iterations per trial on the copying memory problem.

\end{document}